\newtheorem{theorem}{Theorem}[section]
\newtheorem{proposition}[theorem]{Proposition}
\title[Approximate Shielding of Atari Agents for Safe Exploration]{Approximate  Shielding of Atari Agents for Safe Exploration}
\author{Alexander W. Goodall}
\affiliation{
  \institution{Imperial College London}
  \city{London}
  \country{United Kingdom}}
\email{a.goodall22@imperial.ac.uk}
\author{Francesco Belardinelli}
\affiliation{
  \institution{Imperial College London}
  \city{London}
  \country{United Kingdom}}
\email{francesco.belardinelli@imperial.ac.uk}
\begin{abstract}
Balancing exploration and conservatism in the constrained setting is an important problem if we are to use reinforcement learning for meaningful tasks in the real world. In this paper, we propose a principled algorithm for \emph{safe exploration} based on the concept of \emph{shielding}. Previous approaches to shielding assume access to a safety-relevant abstraction of the environment or a high-fidelity simulator. Instead, our work is based on \emph{latent shielding} - another approach that leverages world models to verify policy roll-outs in the latent space of a learned dynamics model. Our novel algorithm builds on this previous work, using safety critics and other additional features to improve the stability and farsightedness of the algorithm. We demonstrate the effectiveness of our approach by running experiments on a small set of Atari games with state dependent safety labels. We present preliminary results that show our approximate shielding algorithm effectively reduces the rate of safety violations, and in some cases improves the speed of convergence and quality of the final agent.
\end{abstract}
\keywords{Safe Reinforcement Learning, Formal Verification, World Models}
\newcommand{\BibTeX}{\rm B\kern-.05em{\sc i\kern-.025em b}\kern-.08em\TeX}
\begin{document}


\pagestyle{fancy}
\fancyhead{}


\maketitle 


\section{Introduction}
Reinforcement learning (RL) \cite{sutton2018reinforcement} has become a principled and powerful tool for training agents to complete tasks in complex and dynamic environments. While RL promises a lot in theory, it unfortunately comes with no guarantees on worst-case performance. In safety-critical applications such as healthcare, robotics, autonomous driving and industrial control systems, it is imperative that decision making algorithms avoid unsafe or harmful situations \cite{amodei2016concrete}. Formal verification \cite{baier2008principles} poses as a mathematically precise technique for verifying system performance and can be used to verify that learned policies respect safety-constraints during training and deployment. 

Recently there has been increasing interest in applying model-based RL (MBRL) algorithms in the constrained setting. This increase in interest can be attributed in part to exciting developments in MBRL \cite{hafner2020mastering, hafner2023mastering} and the superior sample complexity of model-based approaches \cite{hafner2019dream, janner2019trust}. With better sample-complexity, MBRL algorithms should in theory commit far fewer safety violations during training than their model-free counterparts. This is important in the problem of \emph{safe exploration} \cite{amodei2016concrete} where collecting experience is costly and unsafe behaviour can lead to catastrophic consequences in the real world. 

In this work we focus on a method for safe exploration called \emph{shielding} \cite{alshiekh2018safe, jansen2018shielded}.
In its original form, shielding forces hard constraints on the actions performed by the agent to ensure that the agent stays within a verified boundary on the state space. To compute this boundary we typically require a safety-relevant abstraction of the environment that is compact enough to efficiently perform exact verification techniques. Instead we opt to be less restrictive and make minimal assumptions about what we have access to a priori. As in previous work \cite{he2021androids}, we only assume that there exists some expert labelling of the states and we do not have access to a compact model or a safety-relevant abstraction of the environment. The key motivation for making these minimal assumptions is to obtain a more general algorithm that can be applied in many real-life applications where an abstraction is typically not available, as the system might be too complex or unknown in advance.

\begin{figure}[h!]
	\centering
	\includegraphics[width=.25\textwidth]{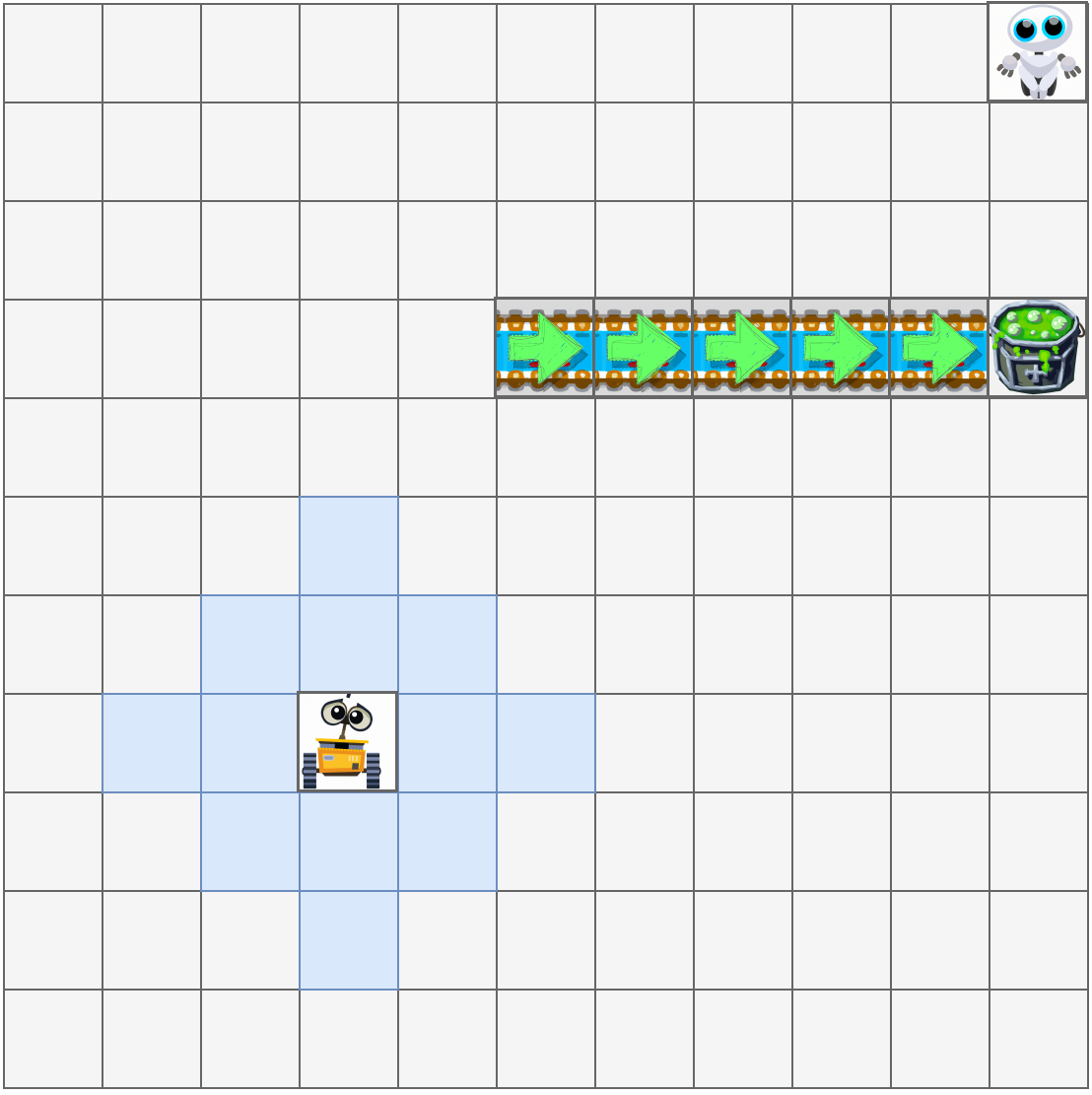}
	\caption{Simple grid-world with goal (Eve) in the top right corner. With a Manhattan distance look-ahead of 2 (blue squares) Wall-E is forever doomed to fall into the acid during exploration, as he is unable to determine that the conveyor belt leads to an unavoidable unsafe state. With safety critics Wall-E can learn the cost value of the conveyor belt squares and avoid them without a further look-ahead horizon. \textsuperscript{*}}
        \raggedright
        \small\textsuperscript{*} This image was created with the assistance of DALL·E 2
	\label{fig:fig1}
    \Description{A simple grid world environment were Wall-E the agent is trying to get to Eva the goal. Wall-E only has a Manhattan look-ahead horizon of 2. There is a conveyor belt with length 5 that ends in acid, without a large look-ahead distance wall-E is doomed to fall into the acid if he steps on to the conveyor belt during exploration. }
\end{figure}

Bounded prescience shielding (BPS) \cite{giacobbe2021shielding} is an approach to shielding that removes the requirement of access to a compact representation or abstraction of the environment. Instead, BPS assumes access to a black-box simulator of the environment which can be queried for look-ahead shielding of the learned policy. \citeauthor{giacobbe2021shielding} demonstrated that pre-trained state-of-the-art model-free Atari agents consistently violate safety-constraints provided by domain experts. And that BPS with a look-ahead horizon of $H=5$ reduced the rate of several shallow safety properties. In this paper we use the same state-dependent safety-labels for Atari games provided by \citeauthor{giacobbe2021shielding}, although we note that our approach has several distinct advantages over BPS: (1) we do not assume access to a black-box model to assist decision making, (2) we are able to apply our shielding algorithm during training without substantial computational overhead, (3) we are able to look-ahead further into the future ($>5$) with deeper model roll-outs and safety critics. 

In a similar fashion to \emph{latent shielding} \cite{he2021androids}, we use world models \cite{ha2018world, hafner2019dream, hafner2020mastering} to learn a dynamics model of the environment for policy optimisation and approximate shielding. The key differences between \emph{latent shielding} and our approach are outlined in Section \ref{sec:approxmodelcheck}. Most notably we utilise safety critics, which are crucial for obtaining further look-ahead ability without explicitly increasing the shielding horizon, see Fig. \ref{fig:fig1}.

\paragraph{Contributions}
Our main contributions are summarised as follows: (1) we augment \emph{latent shielding} \cite{he2021androids} with safety critics used to bootstrap the end of imagined trajectories for look-ahead shielding further into the future, (2) we use twin delayed target critics to reduce the overestimation of expected costs and reduce overly conservative behaviour, (3) we ground our approach in a logical formalism, namely probabilistic computation tree logic (PCTL) \cite{baier2008principles}, (4) we derive PAC-style bounds on the probability of accurately estimating a constraint violation under the assumption of perfect transition dynamics, (5) we empirically show that our approach dramatically reduces the rate of safety violations on a small set of Atari games with state-dependent safety labels and in some cases our algorithm greatly improves the speed of convergence and quality of the learned policy with respect to accumulated reward.


\section{Preliminaries}

In this section we describe the relevant background material and notation required to understand the main results of this paper. We start by introducing the problem setup and the specification language used to formalise the notion of safety used throughout this paper. We then continue with an outline of the world model components and the policy optimisation scheme.

\subsection{Problem Setup}

Atari games in the Arcade Learning Environment (ALE) \cite{machado18arcade} are built on top of the Atari 2600 Stella emulator. The emulator manipulates 128-bytes of RAM which represent the underlying state of the game. However, Agents typically only observe $3 \times 210 \times 160$ dimensional tensors representing each of the pixel values of the screen. Therefore, we model the system as a partially observed Markov decision processes (POMDP) \cite{puterman1990markov}, which in this case is more appropriate than the traditional MDP formulation.

For our purposes we also extend the POMDP tuple with an additional labelling function \cite{baier2008principles}. Formally, we define a POMDP as a tuple $\mathcal{M} = (S, A, p, \iota_{init}, R, \Omega, O, AP, L)$ where, $S$ is a finite set of states, $A$ is a finite set of actions, $p : S \times A \times S \to [0, 1]$ is the probabilistic state-action transition function, $\iota_{init} : S \to [0, 1]$ is the initial state distribution such that $\sum_{s \in S} \iota_{init}(s) = 1$, $R : S \times A \to \mathbb{R}$ is the reward function, $\Omega$ is a finite set of observations, $O : S \times A \times \Omega \to [0, 1]$ is the observation probabilistic function, which defines the probability of an observation conditional on the previous state-action pair, $\textit{AP}$ is a set of atomic propositions which maps to the set of states by an `expert' labelling function $L : S \to 2^{\textit{AP}}$.

In particular, at each timestep $t$ the agent receives an observation $o_t \in \Omega$, a reward $r_t$ and a set of labels $L(s_t) \in 2^{\textit{AP}}$. Given some state formula $\Phi$, the agent can determine if the underlying state $s_t$ satisfies $\Phi$ with the following relation,
\begin{equation*}
    \begin{array}{@{}r@{{}\mathrel{}}c@{\mathrel{}{}}l@{}}
        s \models \textrm{true} & \text{for all} & s \in S\\
    s \models a & \text{iff} & a \in L(s)\\
    s \models \neg \Phi & \text{iff} & s \not \models \Phi\\
    s \models \Phi_1 \land \Phi_2 & \text{iff} & s \models \Phi_1 \land s \models \Phi_2
    \end{array}
\end{equation*}
The goal is to find a policy $\pi$ that maximises expected reward, i.e. $\pi^* = \arg\max_{\pi} \mathbb{E}[\sum_{t=1}^{\infty} \gamma^{t-1} R(s_t, \pi(s_t))]$, while minimising violations of the state formula $\Phi$ (that encodes the safety-constraints) during training. Here $\gamma$ is the discount factor \cite{sutton2018reinforcement}.


\subsection{Probabilistic Computation Tree Logic}

Probabilistic computation tree logic (PCTL) is a branching time temporal logic that extends CTL with probabilistic quantifiers \cite{baier2008principles}. PCTL is particularly useful for specifying reachability and safety properties for discrete stochastic systems which makes it useful for our purposes. A valid PCTL formula can be constructed as follows,
\begin{align*}
    \Phi ::= & \textrm{true} \mid a \mid \neg \Phi \mid \Phi \land \Phi \mid \mathbb{P}_{J}(\phi) \\
    \phi ::= & X \Phi \mid \Phi U \Phi \mid \Phi U^{\leq n} \Phi
\end{align*}
where $a \in AP$ is an atomic proposition, negation ($\neg$) and conjunction ($\land$) are the familiar logical operators, $J \subset [0, 1]$, $J \neq \varnothing$ is a non-empty subset of the unit interval, and next ($X$), until ($U$) and bounded until ($U^{\leq n}$) are temporal operators. We distinguish here between state formula $\Phi$ and path formula $\phi$ which are interpreted over states and paths respectively. 

We write $s \models \Phi$ to indicate that a state $s$ satisfies a state formula $\Phi$, where the satisfaction relation is defined as before, see \cite{baier2008principles} for details. Similarly, we can define the satisfaction relation for path formula $\phi$, this is given in the next section for the specific fragment of PCTL that we require. Also note that the common operators eventually ($\lozenge$) and always ($\square$) and their bounded counter parts ($\lozenge^{\leq n}$ and  $\square^{\leq n}$) can be defined in a familiar way, see \cite{baier2008principles}.

The reason behind using PCTL as our safety specification language is because it allows us to meaningfully trade-off safety and progress by specifying the probability with which we force the agent to satisfy to a given temporal logic formula.



\subsection{Bounded Safety}

The notion of bounded safety for Atari agents introduced by \citeauthor{giacobbe2021shielding} can be straightforwardly grounded in PCTL. Consider some fixed (stochastic) policy $\pi : O \times A \to [0, 1]$ and POMDP $\mathcal{M} = (S, A, p, \iota_{init}, R, \Omega, O, AP, L)$. Together $\pi$ and $\mathcal{M}$ define a transition system $\mathcal{T} : S \times S \to [0, 1]$, where $\sum_{s' \in S} \mathcal{T}(s, s') = 1$. A finite trace with length $n$ of the transition system $\mathcal{T}$, is a sequence of states $s_0 \to s_1 \to ... \to s_n$ denoted $\tau$, the $i^\textrm{th}$ state of $\tau$ is given by $\tau[i]$. A trace $\tau$ satisfies bounded safety if and only if all of its states satisfy the state formula $\Phi$ that encodes the safety constraints. Formally,
\begin{eqnarray}
    \tau \models \square^{\leq n} \Phi & \text{iff} & \text{for all } 0 \leq i \leq n, \tau[i] \models \Phi
\end{eqnarray}
for some bounded look-ahead $n$. Now in PCTL we can say that a state $s \in S$ satisfies $\varepsilon$-bounded safety as follows,
\begin{multline}
    s \models \mathbb{P}_{1 - \varepsilon}(\square^{\leq n} \Phi ) \;\; \text{iff}\\ \mu_s(\{\tau \mid \tau[0] = s, \text{ for all }0 \leq i \leq n, \tau[i] \models \Phi\}) \in [1- \varepsilon, 1] \label{eq:eboundedsafety}
\end{multline}
where $\mu_s$ is a well-defined probability measure induced by the transition system $\mathcal{T}$, over the set of traces staring from $s$ and with finite length $n$, see \cite{baier2008principles} for details. We denote $\mu_{s\models \phi}$ as shorthand for the measure $ \mu_s(\{\tau \mid \tau[0] = s, \text{ for all }0 \leq i \leq n, \tau[i] \models \Phi\})$, where $\phi ::= \square^{\leq n} \Phi $ is the path formula we care about.  By framing bounded safety in this way, we obtain a meaningful way to trade off safety and progress with the $\varepsilon$ parameter. 

\subsection{World Models}

To learn a world model for behaviour learning and look-ahead shielding we leverage DreamerV2 \cite{hafner2020mastering}, which was used to master Atari games in the ALE \cite{machado18arcade}. DreamerV2 is composed of the following components: an image encoder $z_t \sim q_{\theta}(z_t \: | \: o_t, h_t)$ that learns a posterior latent representation conditional on the current observation $o_t$ and recurrent state $h_t$, the recurrent state space model (RSSM) \cite{hafner2019learning} which is a mixture of deterministic and stochastic categorical latents, and the image, reward and discount predictors.

The RSSM consists of two main components: the recurrent model $h_t = f_{\theta}(h_{t-1}, z_{t-1},a_{t-1})$, which computes the next deterministic latents given the past state $ s_{t-1} = (h_{t-1}, z_{t-1})$ and action $a_{t-1}$, and the transition predictor $\hat z_t \sim p_{\theta}(\hat z_t \: | \: h_t)$, which is used as the prior distribution over 
 the stochastic latents conditional on the deterministic latents.

The image predictor or decoder $\hat o_t \sim p_{\theta}(\hat o_t \: | \: h_t, z_t)$ is trained to predict the current observation $o_t$ with a reconstruction loss. The image predictor provides useful self-supervised gradients that help the world model learn a structured latent space for effective policy optimisation \cite{hafner2020mastering}. The reward predictor $\hat r_t \sim p_{\theta}(\hat r_t \: | \: h_t,z_t )$ and discount predictor $\hat \gamma_t \sim p_{\theta}(\hat \gamma_t \: | \:h_t, z_t)$, also provide useful self-supervised gradients. However, they are primarily used to construct targets for policy optimisation.

All components of the world model are implemented as neural networks and jointly trained with backpropagation and straight through gradients \cite{bengio2013estimating}. In addition, KL-balancing \cite{hafner2020mastering} is used to stop the prior and posterior being regularised at the same rate to prevent instability during training.

\subsection{Behaviour Learning}

In DreamerV2 \cite{hafner2020mastering}, policy optimisation is performed entirely on experience `imagined' by rolling out the world model with a fixed (stochastic) policy. A replay buffer $\mathcal{D}$ is used to retain experience from the real environment. At each training step a batch $B$ is sampled from the replay buffer $\mathcal{D}$ and the RSSM is used to sample sequences of compact latent states $\hat s_{1:H}$, using each of the observations in $B$ as a starting point. Here $H$ refers to the `imagination' horizon, which is typically set to a relatively small number ($H=15$) to avoid compounding model errors that are likely to harm the learned policy.

The task policy $\pi^{\textrm{task}}$ parameterised by $\psi^{\textrm{task}}$ is trained to maximise accumulated reward. In addition, a task critic $v^{\textrm{task}}$ parameterised by $\xi^{\textrm{task}}$ is used to guide the learning of the policy. TD-$\lambda$ targets \cite{sutton2018reinforcement} are constructed by rolling out the world model with the task policy $\pi^{\textrm{task}}$,
\begin{equation}
    V_t^{\text{task}, \lambda} = \hat r_t + \hat \gamma_t \begin{cases}
    (1 - \lambda)v^{\text{task}}(\hat s_{t+1}) + \lambda V_{t+1}^{\text{task},\lambda} & \text{if $t < H$,}\\
    v^{\text{task}}(\hat s_{H}) &\text{if $t = H$} \label{eq:tasktargets}
\end{cases}
\end{equation}
The $\lambda$ parameter trades of the bias and variance of the estimate, with $\lambda= 0.0$ giving the high variance n-step Monte-Carlo return and $\lambda =1.0$ giving the high bias one-step return. The task critic $v^{\textrm{task}}$ is regressed towards the value estimates with the following loss function,
\begin{equation}
    \mathcal{L}(\xi^{\text{task}}) = \mathbb{E}_{\pi^{\text{task}}, p_{\theta}} \left[\sum^{H-1}_{t=1} \frac{1}{2} (v^{\text{task}} (\hat s_t) - sg(V_t^{\text{task},\lambda}))^2\right] \label{eq:taskcriticloss}
\end{equation}
where the $sg(\cdot)$ operator stops the flow of gradients to the input argument. The task policy $\pi^{\text{task}}$ is trained with reinforce gradients \cite{sutton2018reinforcement} and an entropy regulariser to encourage exploration. In addition, the difference of the TD-$\lambda$ targets $V^{\text{task},\lambda}$ and the critic estimates $v^{\text{task}}$ are used as a baseline to reduce the variance of the reinforce gradients. This gives the following loss function for the task policy $\pi^{\text{task}}$,
\begin{multline}
    \mathcal{L}(\psi^{\text{task}}) = \\ \mathbb{E}_{\pi^{\text{task}}, p_{\theta}}\bigg[\sum^{H-1}_{t=1} \underbrace{- \log \pi^{\text{task}}(a_t \: | \: \hat s_t)sg(V_t^{\text{task},\lambda} - v^{\text{task}}(\hat s_t))}_{\text{reinforce}} \\ \underbrace{- \eta H(\pi^{\text{task}}(\cdot \: | \: \hat s_t))}_{\text{entropy}} \bigg] \label{eq:taskpolicyloss}
\end{multline}


\section{Approximate Shielding}
\label{sec:approxmodelcheck}
In this section we introduce our approximate shielding algorithm for Atari agents. The general idea is to learn a world model for task policy optimisation, safe policy synthesis and bounded look-ahead shielding. The world model of choice is DreamerV2 \cite{hafner2020mastering} which has demonstrated state-of-the-art performance on the Atari benchmark. While our approach is similar to \emph{latent shielding} \cite{he2021androids} we note that it has following key differences:
\begin{itemize}
    \item We learn a cost predictor to estimate state dependent costs, rather than a labelling function $L_{\vartheta} : S \to \{\textrm{safe}, \textrm{unsafe} \}$. 
    \item We train a safe policy to minimise expected costs, which is used as the backup policy if a safety-violation is detected.
    \item We use safety critics to obtain further look-ahead capabilities without having to roll-out the world model further into the future.
    \item We don't need to use intrinsic punishment \cite{alshiekh2018safe} or any sort of shield introduction schedule \cite{he2021androids}.
    \item We test our approach in a much more sophisticated domain, specifically the ALE \cite{machado18arcade}. 
\end{itemize}
In what follows, we describe the notable components used in our approach, followed by a precise description of the shielding procedure and an outline of the full learning algorithm. However, we will first present some PAC-style bounds on the probability of accurately predicting a constraint violation using our the shielding procedure. It should then become clear in what sense our algorithm approximate. Specifically, `approximate' comes from the fact that we use a learned approximation of the true environment dynamics and we use Monte-Carlo estimation to predict constraint violations.

\subsection{Probabilistic Guarantees}
Recall that to ensure $\varepsilon$-bounded safety we are interested in verifying PCTL formula of the form $ \mathbb{P}_{1 - \varepsilon}(\square^{\leq n} \Phi )$, where $\Phi$ is the state formula that encodes the safety-constraints. To do so we fix the task policy $\pi^{\textrm{task}}$ and the learned world model $p_{\theta}$ to obtain an approximate transition system $\widehat{\mathcal{T}} : S \times S \to [0, 1]$. Even with the `true' transition system $\mathcal{T} : S \times S \to [0, 1]$, exact PCTL verification is $O(\textrm{poly}(size(\mathcal{T}))\cdot n \cdot |\Phi|)$, which is much too big for Atari games. Instead we rely on Monte-Carlo estimation of the measure $\mu_{s\models \phi}$ by sampling traces $\tau$ from the approximate transition system $\widehat{\mathcal{T}}$.

\begin{proposition}
Given access to the `true' transition system $\mathcal{T}$, with probability $1 - \delta$ we can estimate the measure $\mu_{s\models\phi}$ up to some approximation error $\epsilon$, by sampling $m$ traces $\tau \sim \mathcal{T}$, provided,

\begin{equation}
    m \geq \frac{1}{2\epsilon^2} \log\left(\frac{2}{\delta}\right) \label{eq:boundonm}
\end{equation}
\label{prop:boundonm}
\end{proposition}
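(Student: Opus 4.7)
The plan is to recognise that estimating $\mu_{s \models \phi}$ from sampled traces is exactly a Bernoulli mean estimation problem, and then apply Hoeffding's inequality. First I would define, for each sampled trace $\tau^{(i)} \sim \mathcal{T}$ starting from $s$, the indicator random variable $X_i = \mathbbm{1}[\tau^{(i)} \models \phi]$, where $\phi = \square^{\leq n} \Phi$. By construction of the probability measure $\mu_s$ induced by the transition system, each $X_i$ is an independent Bernoulli random variable with $\mathbb{E}[X_i] = \mu_{s \models \phi}$, and takes values in $[0,1]$. The natural Monte-Carlo estimator is then $\hat{\mu}_m = \tfrac{1}{m}\sum_{i=1}^m X_i$.

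Next I would apply the two-sided Hoeffding inequality to the bounded i.i.d.\ random variables $X_1, \ldots, X_m$, giving
\begin{equation*}
    \mathbb{P}\bigl(|\hat{\mu}_m - \mu_{s \models \phi}| \geq \epsilon\bigr) \leq 2 \exp(-2 m \epsilon^2).
\end{equation*}
The remaining step is just algebraic: require the right-hand side to be at most $\delta$, i.e.\ $2 \exp(-2 m \epsilon^2) \leq \delta$, and solve for $m$ to recover the bound in \eqref{eq:boundonm}. Taking the complementary event then gives $\mathbb{P}(|\hat{\mu}_m - \mu_{s \models \phi}| < \epsilon) \geq 1 - \delta$, which is the desired statement.

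There is no real obstacle here; the only subtlety worth flagging in the write-up is the justification that the sampled traces are genuinely i.i.d.\ draws from the measure $\mu_s$, which follows because the proposition hypothesises access to the true transition system $\mathcal{T}$ (so the usual correspondence between paths of a Markov chain and the induced cylinder-set measure applies, per \cite{baier2008principles}), and that each $X_i$ is bounded in $[0,1]$ so Hoeffding applies without needing to know the variance. The bound is distribution-free in $\mu_{s \models \phi}$, which is exactly why Hoeffding (rather than, say, a Chernoff or Bernstein bound) is the right tool for a clean PAC statement.
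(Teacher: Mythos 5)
Your proposal is correct and follows essentially the same route as the paper's own proof: define indicator random variables $X_i = \mathbbm{1}[\tau^{(i)} \models \square^{\leq n}\Phi]$, form the empirical mean, apply the two-sided Hoeffding inequality, and solve $2\exp(-2m\epsilon^2) \leq \delta$ for $m$. Your added remarks on the i.i.d.\ justification via the cylinder-set measure and on boundedness are sensible elaborations of what the paper leaves implicit, but they do not change the argument.
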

\begin{proof}
The proof is a straightforward application of Hoeffding's inequality.
We can estimate $\mu_{s\models\phi}$ by sampling $m$ traces $\langle\tau_j\rangle^m_{j=1}$ from $\mathcal{T}$. Let $X_1, ..., X_m$ be indicator r.v.s such that,

\begin{equation}
    X_j = \begin{cases}
        1 & \text{if $\tau_j \models \square^{\leq n} \Phi$,}\\
        0 & \text{otherwise}
    \end{cases}
\end{equation}
Let,
\begin{equation}
    \hat \mu_{s\models\phi} = \frac{1}{m} \sum^m_{j=1}X_j, \; \text{where} \;\: \mathbb{E}_{\mathcal{T}}[\hat \mu_{s\models\phi}] = \mu_{s\models\phi} 
\end{equation}
Then by Hoeffding's inequality,
$$\mathbb{P}\left[|\hat \mu_{s\models\phi} - \mu_{s\models\phi} | \geq \epsilon \right]\leq 2\exp\left( - 2 m \epsilon^2 \right)$$
Bounding the RHS from above with $\delta$ and rearranging completes the proof.
\end{proof}

With these probabilistic guarantees we can set $m$ appropriately for some domain specific requirements. In the following proposition, we demonstrate how we may be sure that a given state $s$ satisfies $\varepsilon$-bounded safety given our estimate $\hat \mu_{s\models\phi}$.
\begin{proposition}
Suppose we have an estimate $\hat \mu_{s\models\phi} \in [\mu_{s\models\phi} - \epsilon, \mu_{s\models\phi} + \epsilon]$, if $\hat \mu_{s\models\phi} \in [1 - \varepsilon + \epsilon, 1]$ then it must be the case that $\mu_{s\models\phi} \in [1 - \varepsilon, 1]$ and that $s \models \mathbb{P}_{1 - \varepsilon}(\square^{\leq n} \Phi )$.
\end{proposition}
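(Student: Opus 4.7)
The plan is to proceed by direct interval arithmetic on the bounds for $\hat\mu_{s\models\phi}$, then appeal to the definition of $\varepsilon$-bounded safety given in equation~(\ref{eq:eboundedsafety}). The statement is essentially a one-step consequence of how the constants $1-\varepsilon+\epsilon$ and $1-\varepsilon$ have been chosen to line up, so I expect no genuine obstacle; the only subtlety is making sure the upper endpoint is handled correctly, since $\mu_{s\models\phi}$ could in principle be estimated above $1$, but as a probability measure it is always bounded by $1$.

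First I would unfold the hypothesis $\hat\mu_{s\models\phi} \in [\mu_{s\models\phi}-\epsilon,\, \mu_{s\models\phi}+\epsilon]$ into the equivalent two-sided inequality $\hat\mu_{s\models\phi}-\epsilon \leq \mu_{s\models\phi} \leq \hat\mu_{s\models\phi}+\epsilon$. Next I would combine the lower side with the assumption $\hat\mu_{s\models\phi} \geq 1-\varepsilon+\epsilon$ to conclude
\[
\mu_{s\models\phi} \;\geq\; \hat\mu_{s\models\phi} - \epsilon \;\geq\; (1-\varepsilon+\epsilon) - \epsilon \;=\; 1-\varepsilon.
\]
For the upper side, since $\mu_{s\models\phi}$ is by definition a probability measure over the set of length-$n$ traces starting at $s$, we trivially have $\mu_{s\models\phi} \leq 1$ without needing the empirical bound at all. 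Together these give $\mu_{s\models\phi} \in [1-\varepsilon,\,1]$.

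Finally, I would invoke the PCTL semantics recalled in~(\ref{eq:eboundedsafety}): the relation $s \models \mathbb{P}_{1-\varepsilon}(\square^{\leq n}\Phi)$ is defined to hold precisely when $\mu_{s\models\phi} \in [1-\varepsilon,\,1]$, so the conclusion follows immediately. The whole argument is three lines and really just records that the margin $\epsilon$ in the threshold $1-\varepsilon+\epsilon$ has been inflated by exactly enough to absorb the worst-case Monte-Carlo estimation error from Proposition~\ref{prop:boundonm}; the only thing worth flagging explicitly in the write-up is that the upper bound on $\mu_{s\models\phi}$ comes from its being a probability, not from the estimation guarantee.
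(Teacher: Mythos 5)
Your proof is correct and takes essentially the same route as the paper's: both arguments are one step of interval arithmetic showing that the margin $\epsilon$ in the threshold $1-\varepsilon+\epsilon$ absorbs the estimation error, followed by an appeal to Eq.~(\ref{eq:eboundedsafety}). The only cosmetic difference is that you argue directly while the paper argues by contradiction, and you make explicit the (correct) point the paper leaves implicit, namely that the upper bound $\mu_{s\models\phi}\leq 1$ comes from $\mu_{s\models\phi}$ being a probability measure rather than from the estimation guarantee.
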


Note that $\varepsilon$ is the bounded safety parameter used to trade-off exploration and progress and $\epsilon$ is the approximation error from Proposition \ref{prop:boundonm}.
\begin{proof}
Suppose $\hat \mu_{s\models\phi} \in [\mu_{s\models\phi} - \epsilon, \mu_{s\models\phi} + \epsilon]$, $\hat \mu_{s\models\phi} \in [1 - \varepsilon + \epsilon, 1]$ and $\mu_{s\models\phi} \not\in [1 - \varepsilon, 1]$. Then $\hat \mu_{s\models\phi} - \mu_{s\models\phi} > \epsilon $ which contradicts $\hat \mu_{s\models\phi} \in [\mu_{s\models\phi} - \epsilon, \mu_{s\models\phi} + \epsilon]$. This implies that indeed $\mu_{s\models\phi} \in [1 - \varepsilon, 1]$ and that $s \models \mathbb{P}_{1 - \varepsilon}(\square^{\leq n} \Phi )$ by Eq. \ref{eq:eboundedsafety}.
\end{proof}
It is important to note that checking the condition $\hat \mu_{s\models\phi} \in [1 - \varepsilon + \epsilon, 1]$ could lead to overly conservative behaviour, if $\epsilon$ is not very small. This is because for $\mu_{s\models\phi} \in [1- \varepsilon, 1-\varepsilon + \epsilon]$ we may falsely predict that $s \not \models \mathbb{P}_{1 - \varepsilon}(\square^{\leq n} \Phi )$ with some probability up to $1 - \delta$. Instead we could check that $\hat \mu_{s\models\phi} \in [1 - \varepsilon - \epsilon, 1]$, although this may lead to overly permissive behaviour. In words, the former configuration admits no false positives and the latter admits no false negatives (with probability $1- \delta$). Either configuration can be used, although we opt for the former.

To get similar bounds for the approximate transition system $\widehat{\mathcal{T}}$ we can try to get a bound on the total variation (TV) distance between $\widehat{\mathcal{T}}$ and $\mathcal{T}$. However, this is left for future work.
 
\subsection{RSSM with Costs}

We augment the RSSM of DreamerV2 \cite{hafner2020mastering} with a cost predictor $\hat c_t \sim p_{\theta}(\hat c_t \: | \: h_t, z_t)$ used to predict state dependent costs and a safety-discount predictor $\hat \gamma^{\text{safe}}_t \sim p_{\theta}(\hat \gamma^{\text{safe}}_t \: | \: h_t, z_t)$ which is used to help improve the stability of the safety critics. 

In the same fashion as the reward predictor $\hat r_t \sim p_{\theta}(\hat r_t \: | \: h_t,z_t )$ the cost predictor $\hat c_t \sim p_{\theta}(\hat c_t \: | \: h_t, z_t)$ parameterises a Gaussian distribution. We construct targets for the cost predictor as follows,
\begin{equation}
    c_t = \begin{cases}
    0, & \text{if $s_t \models \Phi$} \\
    C, & \text{otherwise}
\end{cases} \label{eq:costtarget}
\end{equation}
where $s_t$ refers to the true underlying state of the environment, $\Phi$ is the state formula that encodes the safety-constraints, and $C>0$ is an arbitrary hyperparameter that determines the cost incurred at a violating state. Using a cost predictor in this way allows the agent to distribute its uncertainty about a constraint violation over several consecutive states.

The safety-discount predictor $\hat \gamma^{\text{safe}}_t \sim p_{\theta}(\hat \gamma^{\text{safe}}_t \: | \: h_t, z_t)$ is a binary classifier trained, in a similar way, to predict if a state is violating or not. We construct targets for the safety-discount predictor as follows,
\begin{equation}
    \gamma^{\text{safe}}_t = \begin{cases}
			\gamma, & \text{if $s_t \models \Phi$}\\
            0, & \text{otherwise}
		 \end{cases} \label{eq:safetydiscounttarget}
\end{equation}
The key purpose of the safety-discount factor is to reduce the overestimation of the expected costs by the safety critics. Using the safety-discount predictions $\hat \gamma^{\textrm{safe}}_t$ to construct targets for the safety critics, instead of the usual discount predictions $\hat \gamma_t$, effectively transforms the MDP into one where violating states are terminal states. This means the safety critics should always be upper bounded by $C$. The full RSSM loss function can now be written as follows,
\begin{multline}
    \mathcal{L}(\theta) = \mathcal{L}_{\text{image}} + \mathcal{L}_{\text{reward}} + \mathcal{L}_{\text{discount}} + \mathcal{L}_{\text{cost}} \\ + \mathcal{L}_{\text{safe-discount}} + \mathcal{L}_{\text{KL-B}} \label{eq:rssmloss}
\end{multline}

\subsection{Safe Policy}

The safe policy $\pi^{\text{safe}}$ is used as the backup policy if we detect that the task policy $\pi^{\text{task}}$ is likely to commit a safety violation in the next $T$ steps. Since we have no access to an abstraction of the environment we cannot synthesise a shield before training and so the safe policy must be learned.

The safe policy $\pi^{\text{safe}}$ is only concerned with minimising expected costs and so we use the cost predictor $\hat c_t \sim p_{\theta}(\hat c_t \: | \: h_t, z_t)$ to construct TD-$\lambda$ targets as follows,
\begin{equation}
    V_t^{\text{safe}, \lambda} = \hat c_t + \hat \gamma_t \begin{cases}
    (1 - \lambda)v^{\text{safe}}(\hat s_{t+1}) + \lambda V_{t+1}^{\text{safe},\lambda} & \text{if $t < H$,}\\
    v^{\text{safe}}(\hat s_{H}) &\text{if $t = H$} \label{eq:safecritictargets}
\end{cases}
\end{equation}
The safe critic $v^{\text{safe}}$ parameterised by $\xi^{\text{safe}}$ is regressed towards the TD-$\lambda$ targets with a similar loss function as before,
\begin{equation}
    \mathcal{L}(\xi^{\text{safe}}) = \mathbb{E}_{\pi^{\text{safe}}, p_{\theta}} \left[\sum^{H-1}_{t=1} \frac{1}{2} (v^{\text{safe}}(\hat s_t) - sg(V_t^{\text{safe},\lambda}))^2\right] \label{eq:safecriticloss}
\end{equation}
The safe policy $\pi^{\text{safe}}$ parameterised by $\psi^{\text{safe}}$ is also trained with biased reinforce gradients and an entropy regulariser as before,
\begin{multline}
    \mathcal{L}(\psi^{\text{safe}}) = \mathbb{E}_{\pi^{\text{safe}}, p_{\theta}}\bigg[\sum^{H-1}_{t=1} \underbrace{ \log \pi^{\text{safe}}(a_t \: | \: \hat s_t)sg(V_t^{\text{safe},\lambda} - v^{\text{safe}}(\hat s_t))}_{\text{reinforce}} \\ \underbrace{- \eta H(\pi^{\text{safe}}(\cdot \: | \: \hat s_t))}_{\text{entropy}} \bigg] \label{eq:safepolicyloss}
\end{multline}
Note that the sign is flipped here, so that the safe policy $\pi^{\text{safe}}$ minimises expected costs rather than maximises them.
\subsection{Safety Critics}

Safety critics estimate the expected costs under the task policy $\pi^{\text{task}}$. They give us an idea of how safe specific states are under the task policy state distribution. Additionally, we can use them to bootstrap the end of `imagined' trajectories for further look-ahead capabilities.  

To estimate the expected costs under the task policy $\pi^{\text{task}}$ we use two safety critics $v^C_1$ and $v^C_2$ parameterised by $\xi^C_1$ and $\xi^C_2$ respectively. To prevent overestimation, the safety critics are jointly trained with a TD3-style algorithm \cite{fujimoto2018addressing} to estimate the following quantity,
\begin{equation}
    \mathbb{E}_{\pi^{\text{task}}, p_{\theta}}\left[ \sum^{\infty}_{t=1}  (\hat{\gamma}^{\text{safe}}_t)^{t-1} \cdot \hat c_t \right]
\end{equation}
Each of the two safety critics $v^C_1$ and $v^C_2$, has its own target critic ${v^C_1}'$ and ${v^C_2}'$, that are updated periodically with slow updates. The TD-$\lambda$ targets are constructed by taking a minimum of the two target critics ${v^C_1}'$ and ${v^C_2}'$ as follows,
\begin{equation}
    V_t^{C, \lambda} = \hat c_t + \hat \gamma^{\text{safe}}_t \begin{cases}
    (1 - \lambda)\min\{{v^C_1}'(\hat s_t),{v^C_2}'(\hat s_t) \} + \lambda V_{t+1}^{C,\lambda} & \text{if $t < H$,}\\
    \min\{{v^C_1}'(\hat s_H),{v^C_2}'(\hat s_H) \} &\text{if $t = H$}
\end{cases} \label{eq:safetycritictargets}
\end{equation}
Both safety critics $v^C_1$ and $v^C_2$ are regressed towards the TD-$\lambda$ targets with the following loss function,
\begin{equation}
    \mathcal{L}(\xi^C, v^C) = \mathbb{E}_{\pi^{\text{task}}, p_{\theta}} \left[\sum^{H-1}_{t=1} \frac{1}{2} (v^C(\hat s_t) - sg(V_t^{C,\lambda}))^2\right] \label{eq:safetycriticloss}
\end{equation}
where $(\xi^C, v^C) \in \{ (\xi^C_1, v^C_1), (\xi^C_2, v^C_2)\}$.
\subsection{Algorithm}
\label{sec:algorithm}
The full learning algorithm is split into two distinct phases: world model learning and policy optimisation (including safe policy synthesis and safety critic learning). To generate experience for world model learning we need to interact with the real environment and to mitigate safety violations in the real environment we pick actions with the shielded policy,
\begin{equation}
    \pi^{\text{shield}}(\cdot \: | \: s) = \begin{cases}
        \pi^{\text{task}}(\cdot \: | \: s) & \text{if $\hat \mu_{s\models\phi} \in [1 - \varepsilon + \epsilon, 1]$} \\
        \pi^{\text{safe}}(\cdot \: | \: s) & \text{otherwise} \label{eq:tracecosts}
    \end{cases} 
\end{equation}
To estimate $\mu_{s\models\phi}$ we roll-out the world model $p_{\theta}$ with the task policy $\pi^{\text{task}}$ to generate a batch of $m$ sequences of compact latent states $\langle\hat s^{(i)}_{1:H}\rangle^m_{i=1}$. For each trace $\tau^{(i)} = \hat s^{(i)}_1, ..., \hat s^{(i)}_H$ we compute the discounted cost as follows,
\begin{equation}
    \textrm{cost}(\tau^{(i)}) = \sum^H_{t=1} (\hat \gamma^{(i)}_t)^{t-1} \cdot \hat c^{(i)}_t 
\end{equation}
\begin{proposition}
    Under the `true' transition system $\mathcal{T}$ if $\textrm{cost}(\tau) < \gamma^{H-1} \cdot C$ then necessarily $\tau \models \square^{\leq H} \Phi$
\end{proposition}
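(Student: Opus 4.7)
The plan is to proceed by contrapositive: assume $\tau \not\models \square^{\leq H} \Phi$ and derive $\textrm{cost}(\tau) \geq \gamma^{H-1} \cdot C$. First I would unpack the bounded-safety semantics used in Eq.~(\ref{eq:eboundedsafety}): failing $\square^{\leq H} \Phi$ means there exists some index $t^\star \in \{1, \ldots, H\}$ with $s_{t^\star} \not\models \Phi$. Because we work under the true transition system $\mathcal{T}$, all learned predictors are assumed exact, so by the cost target definition in Eq.~(\ref{eq:costtarget}) we have $\hat c_{t^\star} = C$, while at every other step $\hat c_t \in \{0, C\}$ is non-negative.

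Next I would use the fact that the ordinary discount $\hat \gamma_t$ (not the safety-discount $\hat \gamma^{\text{safe}}_t$) is what appears in the definition of $\textrm{cost}(\tau)$; under the true dynamics, and absent any episode-termination signal, this is the ambient constant $\gamma \in (0,1]$, so the discount weight at step $t$ is $\gamma^{t-1}$. Since every summand of
$\textrm{cost}(\tau) = \sum_{t=1}^{H} \gamma^{t-1} \hat c_t$
is non-negative, I can lower bound the sum by its single positive term at $t^\star$, obtaining
$\textrm{cost}(\tau) \geq \gamma^{t^\star - 1} \cdot C \geq \gamma^{H-1} \cdot C$,
where the second inequality uses $\gamma \leq 1$ together with $t^\star \leq H$. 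This contradicts the hypothesis and closes the contrapositive.

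The main potential obstacle is purely notational rather than mathematical: one must read $(\hat \gamma^{(i)}_t)^{t-1}$ in the expression for $\textrm{cost}(\tau^{(i)})$ as the $(t-1)$-th power of the standard DreamerV2 discount (which is the constant $\gamma$ under the true dynamics of non-terminating traces), and not as the safety-discount $\hat \gamma^{\text{safe}}_t$ that would zero out after a violation. Once this is fixed the argument is essentially a single inequality: lower-bound a non-negative sum by one well-chosen term and exploit $\gamma^{t^\star-1} \geq \gamma^{H-1}$. No appeal to Hoeffding or to Proposition~\ref{prop:boundonm} is needed here, since the statement is deterministic conditional on the trace $\tau$ realised under $\mathcal{T}$.
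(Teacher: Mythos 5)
Your proof is correct and is essentially the paper's argument in contrapositive form: both hinge on the observation that a single violating state at step $t^\star \leq H$ contributes at least $\gamma^{t^\star-1}C \geq \gamma^{H-1}C$ to the non-negative discounted sum. You spell out the inequality that the paper's terser proof leaves implicit, and your remark about reading the discount as the constant $\gamma$ rather than the safety-discount is a fair clarification of the paper's notation, but the underlying idea is the same.
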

\begin{proof}
The proof is a straightforward argument. By construction $c_t = C$ if and only if $\tau[t] \not \models \Phi$, therefore $\textrm{cost}(\tau) < \gamma^{H-1} \cdot C$ implies that $\forall \: 1 \leq t \leq H \; c_t = 0$ which implies that $\forall \: 1 \leq t \leq H \; \tau[t] \models \Phi$. 
\end{proof}
Using this idea, our estimate $\hat \mu_{s\models\phi} \approx \mu_{s\models\phi}$ is then computed as follows,
\begin{equation}
    \hat \mu_{s\models\phi} = \frac{1}{m} \sum^{m}_{i=1} \mathbbm{1}\left( \textrm{cost}(\tau^{(i)}) < \gamma^{H-1} \cdot C \right) \label{eq:muestimate}
\end{equation}
If we train safety critics then we can use the bootstrapped costs instead,
\begin{equation}
    \textrm{b-cost}(\tau^{(i)}) = \left(\sum^{H-1}_{t=1} (\hat \gamma^{(i)}_t)^{t-1} \cdot \hat c^{(i)}_t \right) + \min\left\{v_1^C(\hat s^{(i)}_H), v_2^C(\hat s^{(i)}_H) \right\} \label{eq:bootstrappedcosts}
\end{equation}
And we can estimate $\mu_{s\models\phi}$ with a larger horizon $T > H$, since the safety critics should capture the expected costs from $\hat s_{H}^{(i)}$ and beyond,   
\begin{equation}
    \hat \mu_{s\models\phi} = \frac{1}{m} \sum^{m}_{i=1} \mathbbm{1}\left( \textrm{b-cost}(\tau^{(i)}) < \gamma^{T-1} \cdot C \right) \label{eq:bootstrappedmu}
\end{equation}
After several environment interactions with the shielding policy $\pi^{\text{shield}}$, a batch of data $B$ is sampled from the replay buffer $\mathcal{D}$, for world model learning, task policy optimisation, safe policy optimisation and safety critic learning. The full algorithm is presented on the following page. 
\begin{algorithm}[!h]
\caption{DreamerV2 \cite{hafner2020mastering} with Shielding}
\raggedright
\label{alg:dreamerV2}
\textbf{Initialise:} replay buffer $\mathcal D$ with $S$ random epsiodes.\\
\textbf{Initialise:} $\theta$, $\psi^{\text{task}}$, $\psi^{\text{safe}}$, $\xi^{\text{task}}$, $\xi^{\text{safe}}$, $\xi^C_1$, $\xi^C_2$, ${\xi^C_1}'$, ${\xi^C_2}'$ randomly.\\
\begin{algorithmic}
\While{not converged}
\State \textit{// World model learning}
\State Sample $B \sim \mathcal{D}$.
\State For every $o_t \in B$ compute sequences $\hat s_{t:t+H}$ with RSSM.
\State Update RSSM parameters $\theta$ with Eq. \ref{eq:rssmloss} 
\State \textit{// Task policy optimisation}
\State From every $o_t \in B$ imagine sequences $\hat s_{t:t+H}$ with $\pi^{\text{task}}$.
\State Compute TD-$\lambda$ targets with Eq. \ref{eq:tasktargets}. 
\State Update task critic parameters $\xi^{\text{task}}$ with Eq. \ref{eq:taskcriticloss}.
\State Update task policy parameters $\psi^{\text{task}}$ with Eq. \ref{eq:taskpolicyloss}.
\State \textit{// Safety critic optimisation}
\State Compute safety critic targets with Eq. \ref{eq:safetycritictargets}.
\State Update safety critic parameters $\xi^C_1$ and $\xi^C_1$ with Eq. \ref{eq:safetycriticloss}.
\State For $i \in [1, 2]$ ${\xi^C_i}' \gets \nu \xi^C_i + (1 - \nu) {\xi^C_i}'$ (soft update \cite{fujimoto2018addressing}).
\State \textit{// Safe policy optimisation}
\State From every $o_t \in B$ imagine sequences $\hat s_{t:t+H}$ with $\pi^{\text{safe}}$.
\State Compute TD-$\lambda$ targets with Eq. \ref{eq:safecritictargets}. 
\State Update safe critic parameters $\xi^{\text{safe}}$ with Eq. \ref{eq:safecriticloss}.
\State Update safe policy parameters $\psi^{\text{safe}}$ with Eq. \ref{eq:safepolicyloss}.
\State \textit{// Environment interaction}
\For{$k = 1, ..., K$} 
    \State Observe $o_t$ from environment and compute $\hat s_t = (z_t, h_t)$.
    \State From $\hat s_t$ sample $m$ sequences $\langle\hat s^{(i)}_{1:H}\rangle^m_{i=1}$ with $\pi^{\text{task}}$.
    \State Estim. $\hat \mu_{s\models\phi} \approx \mu_{s\models\phi}$ with safety critics, Eq. \ref{eq:bootstrappedcosts} and Eq. \ref{eq:bootstrappedmu}.
    \State Play $a \sim \pi^{\text{shield}}(a \: | \: \hat s_t)$ and observe $r_t, o_{t+1}$ and $L(s_t)$.
    \State Construct $c_t$ with Eq. \ref{eq:costtarget} and $\gamma^{\text{safe}}_t$ with Eq. \ref{eq:safetydiscounttarget}.
    \State Append $\langle o_t, a_t, r_t, c_t, \gamma^{\text{safe}}_t, o_{t+1} \rangle$ to $\mathcal{D}$.
\EndFor
\EndWhile
\end{algorithmic}
\end{algorithm}


\section{Experiments}

In this section we conduct a simple analysis and compare our algorithm, DreamerV2 with shielding, to DreamerV2 without shielding. We present results for two Atari games with state dependent labels: Assault and Seaquest (see Fig. \ref{fig:atari}). We start by giving a summary of the environments, followed by the experimental results and an accompanying discussion.

\subsection{Assault}

\emph{Assault} is a fixed shooter game similar to \emph{Space Invaders}. The goal is to shoot and destroy alien ships continuously deployed by a mothership. The smaller ships shoot lasers at the player which the player must avoid, otherwise they loose a life. In addition, the player's weapon can overheat if they fire too often, which also results in them loosing a life. The state dependent formula $\Phi$ that the agent aims to satisfy at each timestep is given as follows,
\begin{equation}
	\Phi = \neg \textbf{hit} \land \neg \textbf{overheat}
\end{equation}
where $\textbf{hit} = \textrm{true}$ iff the player is hit by a laser and $\textbf{overheat} = \textrm{true}$ iff the player's weapon overheats. We chose this environment because \citeauthor{giacobbe2021shielding} demonstrated state-of-the-art agents only concerned with reward overheat the weapon frequently and that BPS \cite{giacobbe2021shielding} alleviated the issue to some degree. The idea is that when the task policy $\pi^{\text{task}}$ is about to overheat the weapon the shield kicks in and the safe policy $\pi^{\text{safe}}$ prevents the agent from firing the weapon while avoiding any incoming lasers. 

\subsection{Seaquest}

\emph{Seaquest} is an underwater shooter in which the player controls a submarine equipped with an infinite supply of missiles. The goal is to rescue divers, shoot enemy sharks and submarines, while managing a limited supply of oxygen and resurfacing when necessary. The player receives points and a full supply of oxygen when they surface with a diver on board and if they surface with six divers they are awarded additional points based on the amount of oxygen they have left. However, surfacing without any divers is not permitted and results in the player loosing a life. The state formula $\Phi$ for Seaquest is a little more involved and is defined as follows,
\begin{multline}
    \Phi = (\textbf{surface} \Rightarrow ((\textbf{diver} \land \textbf{low-oxygen}) \lor \textbf{very-low-oxygen} \lor \\ \textbf{six-divers})) \land \neg \textbf{out-of-oxygen} \land \neg \textbf{hit}
\end{multline}
where $\textbf{surface} = \textrm{true}$ iff the submarine surfaces, $\textbf{diver} = \textrm{true}$ iff the submarine has at least one diver on board, $\textbf{low-oxygen} = \textrm{true}$ iff the players oxygen supply < 16, $\textbf{very-low-oxygen} = \textrm{true}$ iff the players oxygen supply < 4, $\textbf{six- divers} = \textrm{true}$ iff the submarine has six divers on board, $\textbf{out-of-oxygen} = \textrm{true}$ iff the player runs out of oxygen and  $\textbf{hit}$ is defined similarly as before. 

In words, it is only permissible to surface if the agent has a diver and is low on oxygen, has six divers or has very low on oxygen (with or without a diver). Surfacing with a diver when oxygen supplies are plentiful is deemed unsafe since it makes the game unnecessarily harder.

With Seaquest the agent needs to balance multiple objectives at once which is why it is a useful environment to test our approach. In our experiments we demonstrate that the learned safe policy $\pi^{\text{safe}}$ is able to deal with a slightly more complex set of constraints and prevent the agent from making costly mistakes during training.

\begin{figure}[t!]
     \centering
     \begin{subfigure}[b]{0.3\textwidth}
         \centering
         \includegraphics[width=\textwidth]{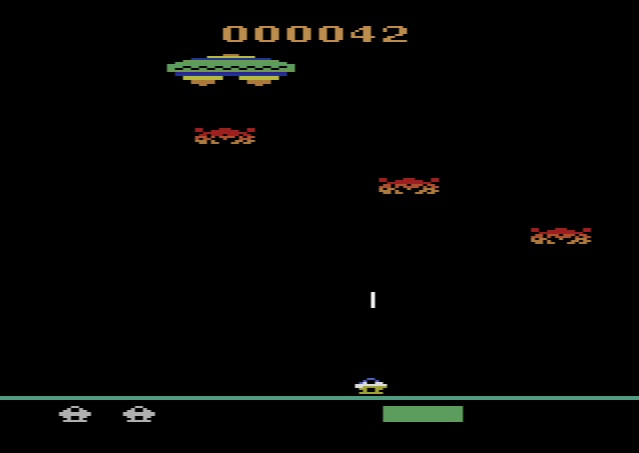}
         \caption{Assault}
         \label{fig:assault}
     \end{subfigure}
     \hfill
     \begin{subfigure}[b]{0.3\textwidth}
         \centering
         \includegraphics[width=\textwidth]{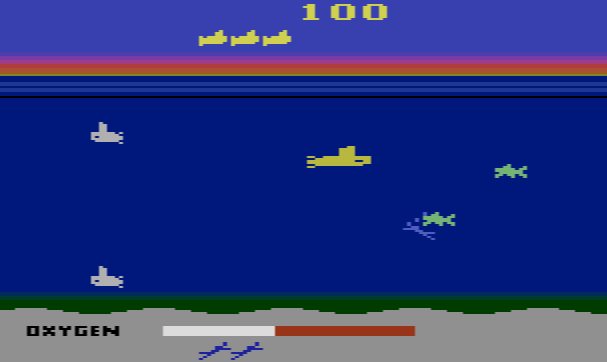}
         \caption{Seaquest}
         \label{fig:seaquest}
     \end{subfigure}
        \caption{Screenshots from the two Atari environments.}
        \label{fig:atari}
        \Description{Screenshots from Atari Assault and Atari Seaquest.}
\end{figure}

\subsection{Training Details}

The agents are trained on a single Nvidia Tesla A30 (24GB RAM) GPU and a 24-core/48 thread Intel Xeon CPU with 256GB RAM. Due to time constraints and limited compute resources all agents are trained on one seed and for precisely 40M frames on Atari environments provided by the ALE \cite{bellemare13arcade, machado18arcade}.

All the hyperparameters for DreamerV2 are set as their default values for Atari games, which are given in \cite{hafner2020mastering}. Notably, for all experiments we set the imagination horizon $H=15$, TD-$\lambda$ discount $\lambda=0.95$ and discount factor $\gamma=0.999$.

The cost and safety-discount predictors are implemented as neural networks with identical architectures to the reward and discount predictors used in DreamerV2. The safe policy, critic and safety critics are also implemented as neural networks in the same way that the task policy and critic are implemented in DreamerV2. See \cite{hafner2020mastering} for all details. The shielding hyperparameters are also fixed in all experiments as follows, specifically we set the bounded safety parameter $\varepsilon = 0.1$, number of samples $m=512$, approximation error $\epsilon = 0.09$\footnote{The gives us roughly $\delta=0.1$, using a tighter bound than Eq. \ref{eq:boundonm} which bounds the probability of overestimating $\mu_{s \models \phi}$.}, shield horizon $T = 30$ (2 seconds in real time) and safety critic smooth parameter updates $\nu = 0.005$.

\subsection{Results}
We evaluate our algorithm by comparing the performance of DreamerV2 \cite{hafner2020mastering} with and without shielding. Specifically, we compare the reward curves during training, the best episode return and the cumulative violations during training. Table \ref{tab:results} presents the best episode scores and total violations during training for DreamerV2 and DreamerV2 with shielding. In addition, Fig. \ref{fig:results}  displays the learning curves for both algorithms.

\begin{figure}
     \centering
     \begin{subfigure}[b]{0.32\textwidth}
         \centering
         \includegraphics[width=\textwidth]{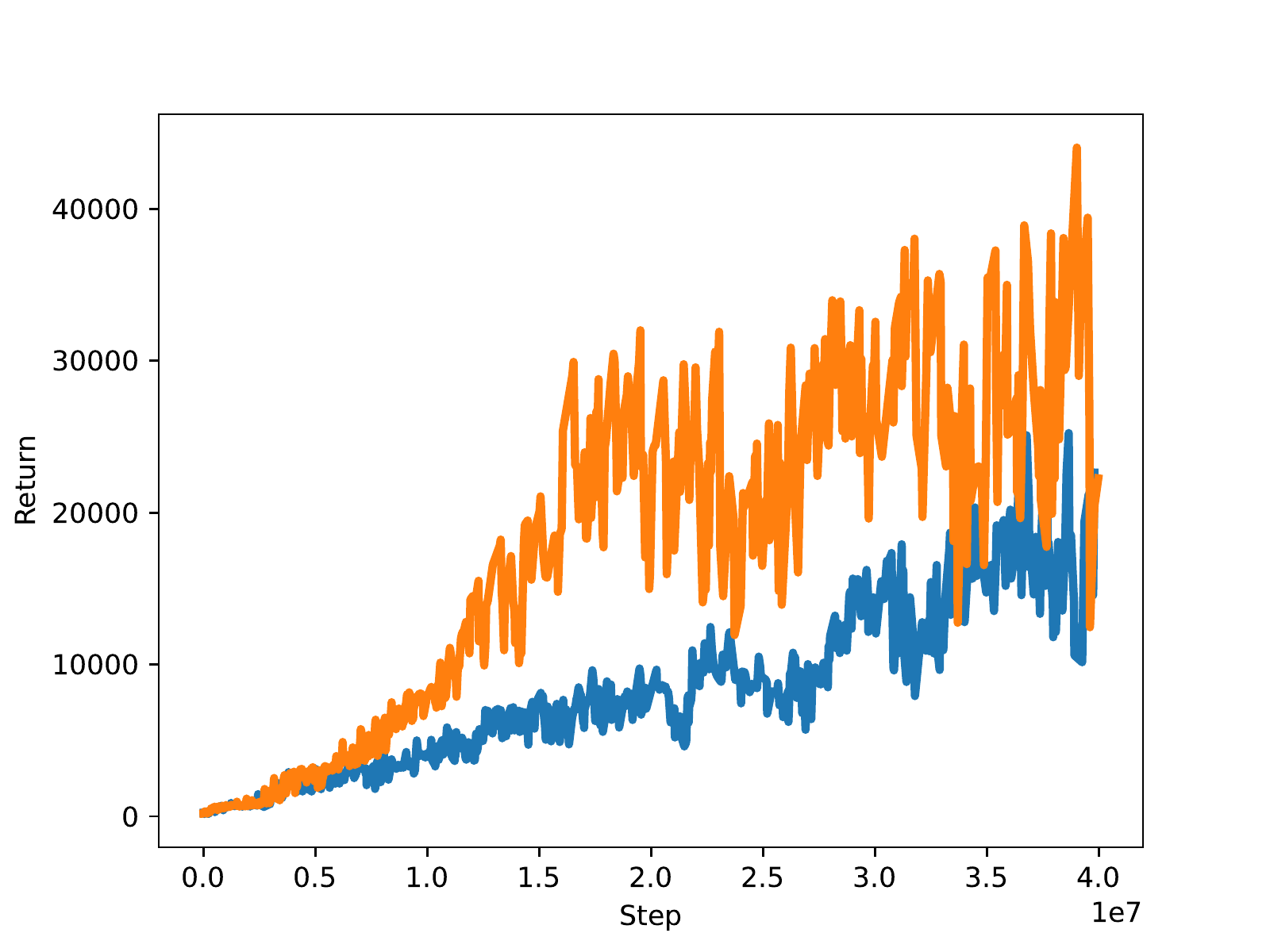}
         \caption{Training reward curve for Assault}
         \label{fig:assaultreturn}
     \end{subfigure}
     \hfill
     \begin{subfigure}[b]{0.32\textwidth}
         \centering
         \includegraphics[width=\textwidth]{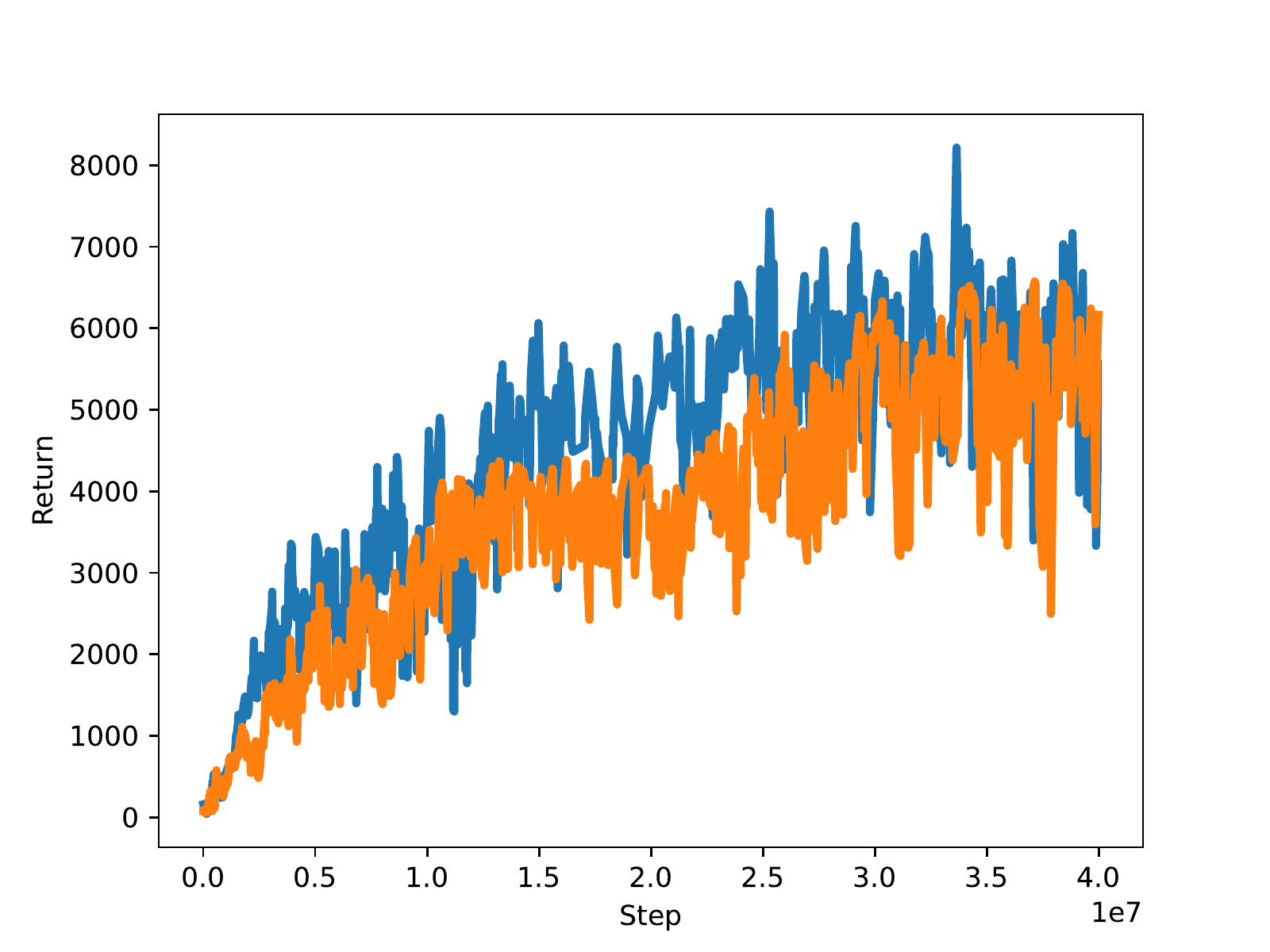}
         \caption{Training reward curve for Seaquest}
         \label{fig:seaquestreturn}
     \end{subfigure}
     \hfill
     \begin{subfigure}[b]{0.32\textwidth}
         \centering
         \includegraphics[width=\textwidth]{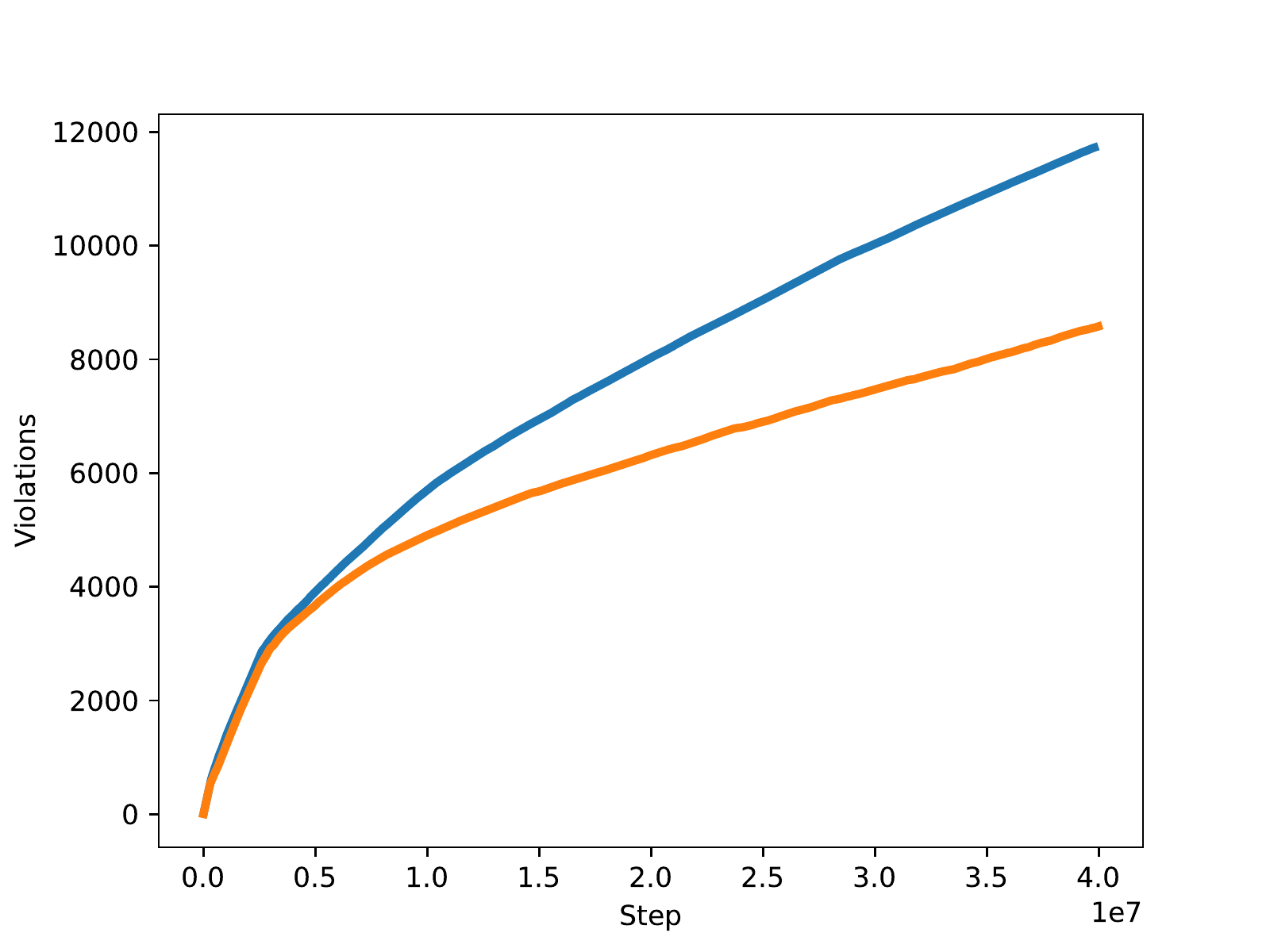}
         \caption{Cumulative violations for Assault}
         \label{fig:assaultviolations}
     \end{subfigure}
     \hfill
     \begin{subfigure}[b]{0.32\textwidth}
         \centering
         \includegraphics[width=\textwidth]{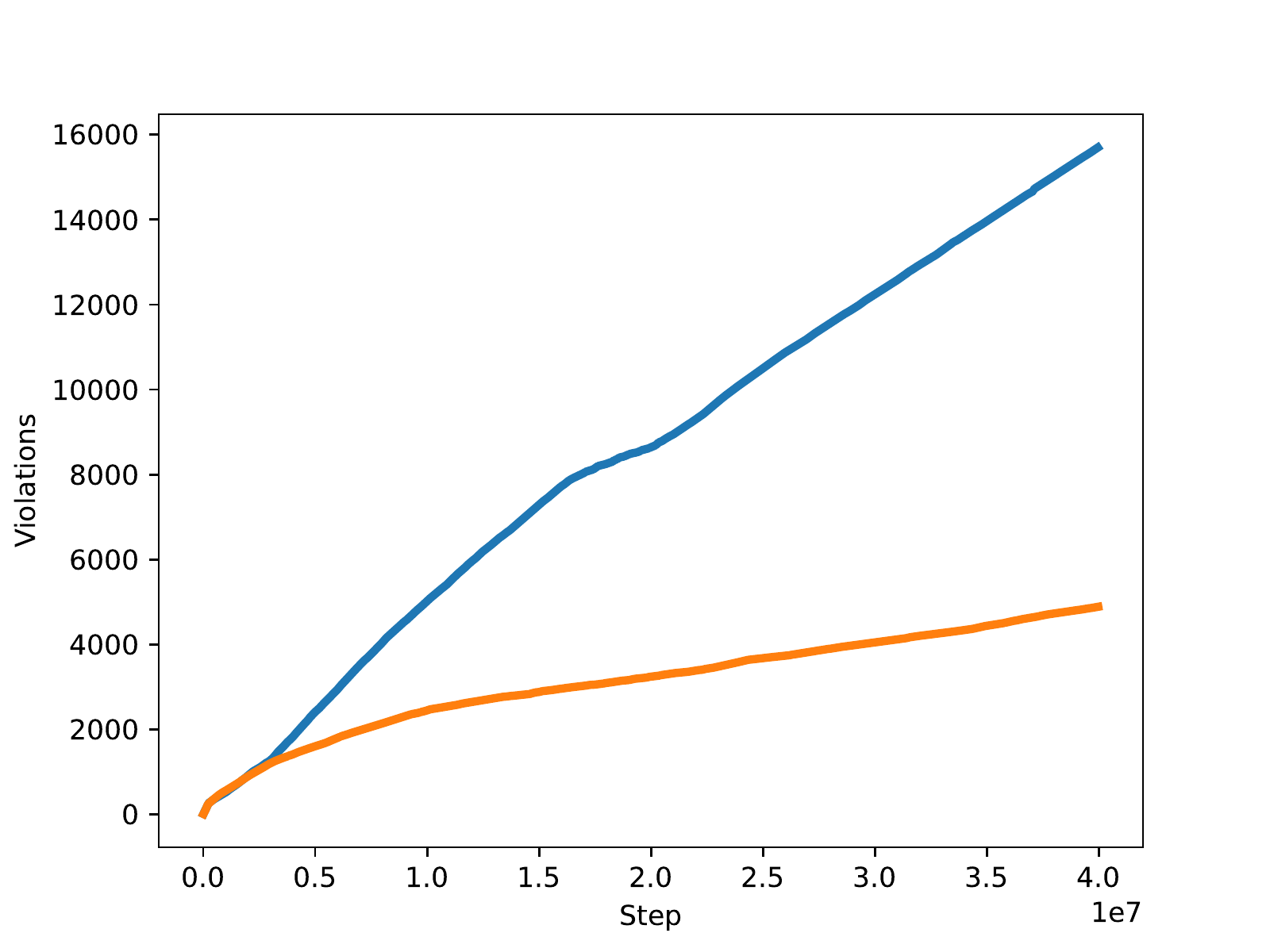}
         \caption{Cumulative violations for Seaquest}
         \label{fig:seaquestviolations}
     \end{subfigure}
     \hfill
     \includegraphics[width=0.35\textwidth]{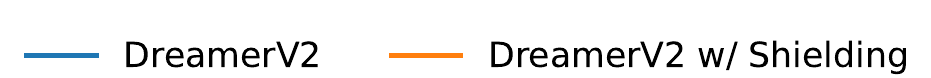}
        \caption{Training curves for DreamerV2 \cite{hafner2020mastering} and DreamerV2 with approximate shielding (ours).\textsuperscript{*}}
        \raggedright
        \small\textsuperscript{*} The reward curves are smoothed with simple exponential smoothing with $w=0.6$.
        \label{fig:results}
        \Description{Training reward curves and violations during training plots for Atari Assault and Atari Seaquest. In Atari Assault DeamerV2 with shielding obtains significantly better reward during training and fewer violations than DreamerV2 without shielding. In Atari Seaquest DeamerV2 with shielding makes fewer violations than DreamerV2 without shielding, although it obtains slightly worse performance.}
\end{figure}
{\renewcommand{\arraystretch}{1.4}
\setlength{\tabcolsep}{0.35em}
\begin{table}[h!]
\centering
\caption{Best episode scores and cumulative violations for for DreamerV2 \cite{hafner2020mastering} and DreamerV2 with approximate shielding.}
\begin{tabular}{ccccc}
     \toprule
     \multirow{2}{*}{Env} & \multicolumn{2}{c}{\emph{DreamerV2}} & \multicolumn{2}{c}{\emph{DreamerV2 w/ Shielding}}  \\
     & Best Score & \# Violations &  Best Score  & \# Violations \\
     \midrule
     Assault &34753&11726&\textbf{57504}&\textbf{8579}\\
     Seaquest &\textbf{11400}&15697&7040&\textbf{4889}\\
     \bottomrule
\end{tabular}
\label{tab:results}
\end{table}}

\paragraph{Discussion}

As seen in Table \ref{tab:results} and Fig. \ref{fig:results} our approximate shielding algorithm reduces the rate of safety violations for both \emph{Assault} and \emph{Seaquest}. In terms of reward, our shielding procedure has dramatically improved the speed of convergence for \emph{Assault} and maintained comparable performance for \emph{Seaquest}. We must note that these results are far from complete as we have not these run experiments over multiple random seeds or for the typical 200M frames, which is used as a common benchmark \cite{hafner2020mastering}. Nevertheless, we claim that our results provide compelling evidence that something is going on, which should motivate further investigation.


\section{Related Work}

In this section we provide a discussion on the three main areas of research that our contribution is based on: world models, safe RL, and shielding.

\textbf{World Models}
were first introduced by \citeauthor{ha2018world} \cite{ha2018world} in a paper of the same name, although their inspiration is much more deeply rooted in psychology \cite{tolman1948cognitive} and Bayesian theories of the brain \cite{friston2003learning}. Dyna -- ``an integrated architecture for learning, planning and reacting'' -- proposed in \cite{sutton1991dyna}, introduced the idea of not only utilising reward signals to learn good policies, but also 
learning a dynamics model through observed transitions \cite{sutton1991dyna}. In theory, planning with the learned dynamics model could speed up convergence of the policy, but many early approaches suffered from model bias \cite{atkeson1997comparison}. \emph{Gaussian processes} (GPs) \cite{williams2006gaussian} were quickly used as the stand-in dynamics model for the Dyna architecture as they reduced model bias by quantifying their own uncertainty \cite{deisenroth2011pilco}. However, GPs struggle in high-dimensional settings and so the use neural architectures has been increasingly explored instead.

More recently, with the neural architecture Dreamer \cite{hafner2019dream}, \citeauthor{hafner2019dream} demonstrated that policies can be learnt purely from imagined experience and transfer well to the original environment. Additionally, DreamerV2 \cite{hafner2020mastering} and DreamerV3 \cite{hafner2023mastering} demonstrated state-of-the-art performance in a variety of domains including the Atari benchmark \cite{bellemare13arcade, machado18arcade} and MineRL \cite{guss2019neurips} both of which have been notorious challenges for MBRL. 

Once a world model is learned it can be used in a flexible manner for policy optimisation \cite{hafner2019dream}, online planning schemes \cite{hafner2019learning, hung2022reaching, wu2022plan}, risk measures \cite{zhang2020cautious}, and defining intrinsic rewards for improved exploration \cite{sekar2020planning, latyshev2023intrinsic}. As a result world model have been applied in a variety of domains, such as robotics \cite{wu2022daydreamer}, imitation learning \cite{demossditto}, continual learning \cite{kessler2022surprising} and safe RL \cite{as2022constrained}.

\textbf{Safe RL}
\balance
is typically categorised as the problem of maximising reward, while maintaining some reasonable system performance during learning and deployment of the agent \cite{garcia2015comprehensive}. This definition has been interpreted in many different ways, stemming from different objectives in different domains. For example, reward hacking \cite{amodei2016concrete, skalse2022defining} refers to an agent `gaming' or exploiting a misspecified reward function, which can lead to undesired outcomes. Robustifying policies to distributional shift \cite{andrychowicz2020learning, peng2018sim, urpi2021risk} and the alignment problem \cite{russellnorvig2021, di2022goal} are also important areas of research in safe RL. However, we tackle the problem of \emph{safe exploration} \cite{pecka2014safe, garcia2015comprehensive} which can be described as the problem of minimising the violation of safety-constraints during the exploratory phase of training and beyond. 

The constrained Markov decision process (CMDP) \cite{altman1999constrained} is a widely used framework for modeling decision-making problems with safety constraints. In addition to maximising expected reward, agents must satisfy a set of constraints encoded as a cost function that penalises unsafe state-action pairs. In the tabular case, linear programs can be used to solve CMDPs \cite{altman1999constrained}. In the non-tabular case, a variety of model-free algorithms with function approximation have been proposed \cite{chow2017risk, achiam2017constrained, bohez2019value, liu2020ipo, yang2020projection}.

Model-based approaches for safe RL utilise a variety of different techniques for dynamics modelling and policy optimisation. \citeauthor{berkenkamp2017safe} use GPs to quantify model uncertainty in a principled way to safely learn neural network policies. Other approaches use ensembles of neural networks (NNs) to quantify uncertainty and either deploy MPC \cite{liu2020constrained, thomas2021safe}, perform policy optimisation within a certified region of the state space \cite{luo2021learning}, or use constrained policy optimisation with Lagrangian relaxation \cite{zanger2021safe} to learn safety-aware policies. Notable work by \citeauthor{as2022constrained} leverages Dreamer \cite{hafner2020mastering} and stochastic weight averaging Gaussian (SWAG) \cite{maddox2019simple} to obtain a Bayesian predictive distribution over possible world models that explain the dynamics of the environment. \citeauthor{as2022constrained} also stress the importance of policy optimisation with safety critics over shortsighted MPC schemes.

\textbf{Shielding for RL}
has been introduced as a correct by construction reactive ({\em shield}), which prevents the learned policy from entering unsafe states defined by some temporal logic formula \cite{alshiekh2018safe}. The shield itself can be applied before the agent picks an action (preemptive), modifying the action space of the agent. Alternatively, the shield can be applied after the agent picks an action (post-posed), overriding actions proposed by the agent if they lead to a violation. Both types of shield require the ability to construct and solve a safety game \cite{bloem2015shield} on a relatively compact representation of the MDP. Similar to control barrier functions (CBFs) \cite{ames2016control} from optimal control theory \cite{kirk2004optimal}, the shield projects the learned policy back into a verified safe set on the state space.

Recent work on shielding includes generalising it to partially observed \cite{carr2022safe} and multi-agent \cite{elsayed2021safe} settings, as well as resource constrained partially observed MDPs \cite{ajdarow2022shielding}. Many of these methods still require a suitable abstraction of the environment or sufficient domain knowledge for synthesising a shield. However, these strong assumptions come hand in hand with strong guarantees on safety of the learned policy, specifically \cite{alshiekh2018safe} show that by construction their 
shield synthesis procedure guarantees safety with minimal interference. 

Learning a shield online is an alternative approach to shielding RL policies without requiring significant prior knowledge. For example, \citeauthor{shperberg2022learning} propose tabular and parametric shields which are learning online to prevent agents from repeating catastrophic mistakes in the partially observed setting \cite{shperberg2022learning}. Other online shielding approaches include \emph{latent shielding} \cite{he2021androids} and BPS, both of which have substantially influenced our work. For a more complete review of reactive methods based on shielding we refer the interested reader to \cite{odriozola2023shielded}.

\section{Conclusions}
In this paper we presented an approximate shielding algorithm for safe exploration of Atari agents and more general RL policies. Building on DreamerV2 \cite{hafner2020mastering} and previous work, such as, \emph{latent shielding} \cite{he2021androids} and BPS \cite{giacobbe2021shielding}, we propose a more general algorithm that uses safety critics and policy roll-outs to perform look-ahead shielding in the latent space of a learned world model. 

In contrast to previous work, we are able to successfully apply our approximate shielding algorithm with minimal hyperparameter tuning and no shielding introduction schedules. While we loose the benefit of strict and formal guarantees obtained by earlier shielding approaches \cite{alshiekh2018safe}, we are able to derive some probabilistic guarantees, although this is incomplete and further work should be done to derive bounds for the approximate transition system.

Nevertheless, our empirical results are promising and provide some good evidence that general RL agents can benefit from shielding in certain settings, not only in terms of complying with safety specifications, but also in terms of improved performance. The aim of this research is to shed light on the promise of this approach and we hope this opens the door to further investigation.



\begin{acks}
This work was supported by UK Research and Innovation [grant number EP/S023356/1], in the UKRI Centre for Doctoral Training in Safe and Trusted Artificial Intelligence (\url{www.safeandtrustedai.org}).
\end{acks}



\bibliographystyle{ACM-Reference-Format} 
\bibliography{sample}


\end{document}